\newtheorem{remark}{Remark}
\newcommand{\val}[1]{[\![{#1}]\!]}
\newcommand{\descr}[1]{(\![{#1}]\!)}
\newcommand{\bel}{\mathsf{bel}}
\newcommand{\pl}{\mathsf{pl}}
\newcommand{\mass}{\mathsf{m}}
\newcommand{\DS}{DS}
\newcommand{\commentSabine}[1]{}
\newcommand{\redbf}[1]{\textcolor{red}{\textbf{#1}}}
\newcommand{\redfootnote}[1]{\footnote{\textcolor{red}{\textbf{#1}}}}
\def\BibTeX{{\rm B\kern-.05em{\sc i\kern-.025em b}\kern-.08emT\kern-.1667em\lower.7ex\hbox{E}\kern-.125emX}}
\begin{document}

%
\title{Toward a Dempster-Shafer theory of concepts}

%
\author{Sabine Frittella}\thanks{The research of Sabine Frittella was partially funded by the grant PHC VAN GOGH 2019, project n$^\circ$42555PE and by the grant ANR JCJC 2019, project PRELAP (ANR-19-CE48-0006).}
\affiliation{%
  \institution{INSA Centre Val de Loire,
  Univ. Orl\'eans, LIFO EA 4022}
  \city{Bourges}
  \country{France}
}
\email{sabine.frittella@insa-cvl.fr}
\orcid{0000-0003-4736-8614}

\author{Krishna Manoorkar}
\affiliation{%
  \institution{Technion, Haifa, Israel}
}

\author{Alessandra Palmigiano}
\affiliation{%
  \institution{VU School of Business and Economics, Amsterdam, the Netherlands, and Department of Mathematics and Applied Mathematics, University of Johannesburg, South Africa}
}

\author{Apostolos Tzimoulis}
\affiliation{%
  \institution{VU School of Business and Economics}
  \city{Amsterdam}
  \country{The Netherlands}
}

\author{Nachoem Wijnberg}
\affiliation{%
 \institution{Amsterdam Business School, the Netherlands and College of Business and Economics, University of Johannesburg, South Africa}
 }

%
\renewcommand{\shortauthors}{Frittella Manoorkar Palmigiano Tzimoulis Wijnberg}

%
\begin{abstract}
In this paper, we  generalize the basic notions and results of  Dempster-Shafer theory from predicates to formal concepts.  Results include the representation of  conceptual belief functions as inner measures of suitable probability functions, and a Dempster-Shafer rule of combination on belief functions on formal concepts.
\smallskip

\noindent\textit{Keywords.} Formal Concept Analysis, Dempster-Shafer theory, epistemic logic.
\end{abstract}

\maketitle

\commentSabine{
\section*{reviewers comments}
\begin{itemize}
\item \redbf{add references} to:
\begin{itemize}
\item existing research in the area of formal concept analysis, rough sets, and three-way decisions.\\
see 
\href{https://scholar.google.com/citations?hl=en&user=znXpLnoAAAAJ}{https://scholar.google.com/citations?hl=en\&user=znXpLnoAAAAJ}
\item suitable references along the text, in order to consult the preliminaries
\item reference recent work on lattice probability that seems related to their approaches.\\
C. Zhou, Artificial Intelligence 201 (2013) 1-31. Belief functions on distributive lattices\\
Belief functions on lattices
Michel Grabisch
First published: 18 November 2008
\end{itemize}
\item \redbf{Reviewer 2.}  
There are needed definitions and preliminary ideas used throughout the text that are missing and  that would be of great relevance for readers (for example, DS-structure, Dempster-Shafer's procedure mentioned in section 3.3, etc.).
\end{itemize}
}

\section{Introduction}
\label{sec:intro}

\paragraph{Dempster-Shafer theory.}
Dempster-Shafer theory \cite{dempster1968generalization, shafer1976mathematical} is a formal framework for decision-making under uncertainty in situations in which some predicates cannot be assigned subjective probabilities. 
The core proposal of Dempster-Shafer theory is that, in such cases, the missing value can be replaced by a range of values, the lower and upper bounds of which are  assigned by {\em belief} and {\em plausibility} functions (cf.\ Definition \ref{def:bel-func}). 
This situation can  be captured with probabilistic tools via {\em probability spaces}. The later are structures $\mathbb{X} = (S, \mathbb{A}, \mu)$ where $S$ is a nonempty (finite) set, $\mathbb{A}$ 
is a $\sigma$-algebra  of subsets of $S$, and $\mu: \mathbb{A}\to [0, 1]$ is a probability measure. In probability spaces,  belief and plausibility functions  naturally arise as the  {\em inner} and {\em outer measures} induced by $\mu$ (cf.~\cite[Proposition 3.1]{fagin1991uncertainty}
). 
Moreover,  every belief function can be {\em represented} as the inner measure of some probability space (cf.~\cite[Theorem 3.3]{fagin1991uncertainty}). 
In \cite{Grabisch2008,Zhou13}, generalisations of Dempster-Shafer theory from Boolean algebras to so-called De Morgan type lattices and distributive lattices are presented.

\paragraph{Combination of beliefs.}
Central to Dempster-Shafer theory is the rule of combination of beliefs (representing e.g.~evidence, hints, or preferences) (see \cite{shafer1976mathematical}). It allows to combine possibly conflicting beliefs proceeding from multiple independent sources. Intuitively, the combined belief  according to the Dempster-Shafer  rule  highlights the converging portions of evidence,  and downplays the conflicting ones.

\paragraph{Formal concept analysis.}  Formal concept analysis (FCA) \cite{ganter2012formal} is a successful and very elegant theory in data analysis based on algebraic and lattice-theoretic facts.
In formal concept analysis, databases are represented as formal contexts, i.e. structures $(A,X,I)$ such that $A$ and $X$ are sets, and $I \subseteq A \times X$ is a binary relation. 
Intuitively, $A$ is understood as a collection of objects, $X$ as a collection of features, and for any object $a$ and  feature $x$, the tuple $(a,x)$ belongs to $I$ exactly when object $a$ has feature $x$. Every formal context $(A,X,I)$ can be associated with the lattice of its formal concepts  (see Section \ref{ssec:DS:structures} for more details).   
FCA relies on Birkhoff's representation theory of complete lattices \cite{LatticesAndOrder} that states that any complete lattice can dually be represented by a formal context and vice-versa.

\paragraph{Lifting methodology.} 
On the basis of algebraic and logical insights, in \cite{roughconcepts} a  methodology has been introduced which generalizes Rough Set Theory from sets to  {\em formal   concepts}
 and in  \cite{GLMP18, ICLA2019paper}, the proof-theoretic aspects of this generalization are explored. 
Building on these insights and results, and in particular building on~\cite[Section 7.3]{roughconcepts}, we propose a generalization of Dempster-Shafer theory to formal concepts. 

\paragraph{Structure of the paper.}
This paper develops  preliminary theoretical results on the  generalization of Dempster-Shafer theory applied  to formal concepts and illustrates its potential at formalizing decision-making problems concerning categorization.  

In Section \ref{sec:order:th:analysis}, we provide the preliminaries on probability spaces, belief functions and plausibility functions, then we 
propose an order-theoretic analysis of the toggle between finite probability spaces and belief functions on finite sets, with a particular focus on  \cite[Theorem~3.3]{fagin1991uncertainty}.

In Section \ref{sec:conceptual:DS:structures}, we generalise the result of \cite[Theorem~3.3]{fagin1991uncertainty} from probability spaces to formal concepts. 
In Section \ref{ssec:DS:structures},
we provide preliminaries on formal concepts, introduce 
conceptual $DS$-structures and generalise the notions of belief functions and probability spaces for conceptual $DS$-structures.
In Section \ref{ssec:theorem}, we prove that belief and plausibility functions on conceptual $DS$-structures can be represented as the inner and outer measures of some probability space (see Theorem \ref{th:3.3forlat}). We  provide both a purely algebraic proof and a frame theoretic proof of that result.
The former highlights why the construction provides 
belief and plausibility functions, while the latter allows the reader to understand the structure of conceptual probability space on which the inner and outer measures are interpreted.
In Section \ref{ssec:combining}, we show how to adapt Dempster-Shafer combination of evidence to formal concepts.

In Sections \ref{sec:ex:preferences} and \ref{ssec:ex:objects}, we present two examples to illustrate how Dempster-Shafer theory can be used for formal concepts. In Section \ref{sec:ex:preferences}, we illustrate how Dempster-Shafer combination of evidence can be used to  aggregate preferences. 
In Section \ref{ssec:ex:objects}, we show how Dempster-Shafer theory can be used to make categorisation decision. Namely to answer the question: to which category does an unknown object belongs?

\section{Order-theoretic analysis}
\label{sec:order:th:analysis}

Belief and plausibility functions are one proposal among others to generalise probabilities to situations in which some predicates cannot be assigned subjective probabilities.
In this section, we collect preliminaries on belief and plausibility functions on sets (for more details on imprecise probabilities see \cite{Walley1991}), finite probability spaces, and develop an order theoretic analysis of these notions which leads to an algebraic reformulation of \cite[Theorem 3.3]{fagin1991uncertainty}.  
\paragraph{Belief, plausibility and mass functions.}
\label{def:bel-func}
A {\em belief function} (cf.~\cite[Chapter 1, page 5]{shafer1976mathematical}) on a set $S$ is a map $\bel: \mathcal{P}(S)\to [0,1]$ such that 
$\bel(S)=1$,  and for every $n\in \mathbb{N}$,
\begin{equation} 
\bel (  A_1 \cup \dots \cup A_n) \ \geq  \
\sum_{\varnothing \neq I \subseteq \{1, \dots , n\}}
(-1)^{|I|+1} \bel \left( \bigcap_{i \in I} A_i \right).
\end{equation}
A {\em plausibility function on} $S$ is a map $\pl: \mathcal{P}(S)\to [0,1]$ such that 
$\pl(S)=1$,  and for every $n\in \mathbb{N}$,
\begin{equation} 
\pl (A_1 \cup A_2 \cup ... \cup A_n) \ \leq \ 
\sum_{\varnothing \neq I \subseteq \{1,2,...,n\} }
(-1)^{|I| +1}\pl 
\left( \bigcap_{i \in I} A_i 
\right).
\end{equation}
Belief and plausibility functions on sets are interchangeable notions: for every belief function $\bel$ as above, the assignment  $X\mapsto 1- \bel(X)$ defines a plausibility function on $S$, and for every plausibility function $\pl$ as above, the assignment  $X\mapsto 1- \pl(X)$ defines a belief function on $S$.
A {\em mass function} on a set $S$ is a map $\mass: \mathcal{P}(S)\to [0,1]$ such that 
\begin{equation} 
\sum_{X \subseteq S} \mass (X) = 1.
\end{equation}
On finite sets, belief (resp.~plausibility) functions and mass functions are interchangeable notions:  any mass function $\mass$ as above induces the belief function   $\bel_{\mass}: \mathcal{P}(S)\to [0,1]$ defined as 
\begin{equation} 
\bel_\mass(X) := \sum_{Y \subseteq X} \mass(Y) \qquad \text{ for every } X \subseteq S,
\end{equation}
and conversely, any belief function $\bel$ as above induces the mass function   $\mass_{\bel}: \mathcal{P}(S)\to [0,1]$ defined as 
\begin{equation} 
\mass_\bel (X) := \bel(X) - \sum_{Y \subseteq X} (-1)^{|X \smallsetminus Y|} \bel(Y) \qquad \text{ for every } X \subseteq S.
\end{equation}

A {\em $DS$-structure} 
\footnote{
$DS$-structures are also known in the literature as tuples $(S, \bel, \pi)$ 
(cf.~\cite{fagin1991uncertainty}) where $\pi$ is a valuation. 
This valuation is necessary to show \cite[Proposition 3.4]{fagin1991uncertainty} which shows the equivalence between arbitrary and finite $DS$-structures. 
Since in this paper we restrict ourselves to finite structures the valuation here is redundant.
}
is a tuple $(S, \bel)$ with $S$ a set, $\bel : \mathcal{P}(S) \rightarrow [0,1]$. 
Central to Dempster-Shafer theory is the rule of combination of beliefs (representing e.g.~evidence, hints, or preferences) (see \cite{shafer1976mathematical}). Given two  
$DS$-structures $(S, \bel_1)$ and $(S, \bel_2)$ over a set $S$, Dempster-Shafer  rule for combining belief functions provides a procedure to compute a new 
$DS$-structures $(S, \bel_{1 \oplus 2})$ that represent the belief obtained from aggregating the information from $\bel_1$ and $\bel_2$. To  compute the aggregated belief, one combines first the mass functions $\mass_1$ and $\mass_2$ that arise from $\bel_1$ and $\bel_2$ as follows:
\begin{align}
\label{eq:combinatin rule:prelim}
\mass_{1 \oplus 2} \ : \ \mathcal{P}(S) & \rightarrow [0,1]
\\
c & \mapsto 
\left\{
    \begin{aligned}
        \; &0  & \mbox{if } \val{c}=\varnothing \\
        \; &\frac{\sum \{ \mass_1(c_1) \cdot \mass_2(c_2) \mid c_1 \cap c_2 = c  \} }{\sum \{  \mass_1(c_1) \cdot \mass_2(c_2) \mid c_1 \cap c_2 \neq \varnothing    \} }  
        & \mbox{otherwise.}
    \end{aligned}
\right.
\notag
\end{align}
Dempster-Shafer  rule allows to combine possibly conflicting beliefs proceeding from multiple independent sources. Intuitively, the combined belief  according to the Dempster-Shafer rule highlights the converging portions of evidence, and downplays the conflicting ones.

Recall that a {\em$\sigma$-algebra} on a set $S$ is a collection $\Sigma$ of subsets of $S$ that includes $S$ itself, is closed under complement, and is closed under countable unions. 
A {\em probability space} (cf.~\cite[Section 2, page 3]{fagin1991uncertainty}) is a structure $\mathbb{X} = (S, \mathbb{A}, \mu)$ where $S$ is a nonempty (finite) set, $\mathbb{A}$ is a $\sigma$-algebra on $S$, and $\mu: \mathbb{A}\to [0, 1]$ is a countably additive probability measure.  Let  $e: \mathbb{A}\hookrightarrow \mathcal{P}(S)$ denote the natural embedding of $\mathbb{A}$ into the powerset algebra of $S$. 
Any  $\mu$ as above induces the {\em inner} and {\em outer measures} $\mu_\ast, \mu^\ast: \mathcal{P}(S)\to [0,1]$ (cf.~\cite[Section 2, page 4]{fagin1991uncertainty}), respectively defined as 
\begin{align}
\mu_\ast(Z): = \text{sup}\{\mu (b)\mid b\in \mathbb{A} \text{ and } e(b)\subseteq Z\} \quad\text{ and }\quad \mu^\ast(Z): = \text{inf}\{\mu (b)\mid b\in \mathbb{A}\text{ and } Z\subseteq e(b)\}.
\end{align}
By construction, $\mu_\ast(e(b)) = \mu(b) = \mu^\ast(e(b))$ for every $b\in \mathbb{A}$ and $\mu^\ast (Z) = 1-\mu_\ast(Z)$ for every $Z\subseteq S$. 
Moreover, for every  probability space $\mathbb{X} = (S, \mathbb{A}, \mu)$, the inner (resp.~outer) measure induced by $\mu$ is a  belief (resp.~plausibility) function on $S$ (cf.~\cite[Proposition 3.1]{fagin1991uncertainty}). For more details on probability spaces, we refer the reader to \cite{MeasureTheory}.

\paragraph{Order-theoretic analysis.} In a finite probability space  $\mathbb{X}$  as above, the natural embedding $e: \mathbb{A}\hookrightarrow \mathcal{P}(S)$ is a {\em complete} lattice homomorphism (in fact, it is a complete Boolean algebra homomorphism, but in the context of Boolean algebras, these two notions collapse). Hence, the right and left adjoints of $e$ exist, denoted $\iota, \gamma:  \mathcal{P}(S)\twoheadrightarrow \mathbb{A}$ respectively, and defined as  
\begin{align}
\iota( Y) := \bigcup\{a\in \mathbb{A}\mid e(a)\subseteq Y\}\quad\text{ and }\quad 
\gamma (Y): = \bigcap\{a\in \mathbb{A}\mid Y\subseteq e(a)\}.
\end{align}
\begin{lemma}
\label{lemma: from partial probability to modal operators}
For every finite probability space $\mathbb{X} = (S, \mathbb{A}, \mu)$, and every  $Y\in \mathcal{P}(S)$,
\begin{align}
 \mu_\ast(Y) = \mu(\iota(Y))\; \mbox{  and } \;\mu^\ast(Y) = \mu(\gamma(Y)).
\end{align}  
\end{lemma}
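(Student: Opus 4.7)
The plan is to reduce the two equalities to the observation that, because $S$ is finite, the sets over which the sup and inf in the definitions of $\mu_\ast$ and $\mu^\ast$ range admit a maximum and a minimum respectively, and that these extremal elements are exactly $\iota(Y)$ and $\gamma(Y)$. Since $\mu$ is monotone on $\mathbb{A}$, the sup/inf are then attained, which yields the claim.

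More precisely, for the first identity I would first argue that $\iota(Y)$ belongs to $\mathbb{A}$: since $S$ is finite, $\mathbb{A}$ is finite, and being a $\sigma$-algebra it is closed under the (finite) union that defines $\iota(Y)$. By its definition as a union, $\iota(Y)$ dominates every $b \in \mathbb{A}$ with $e(b) \subseteq Y$, and it itself satisfies $e(\iota(Y)) \subseteq Y$ (equivalently, $\iota(Y)$ is the value at $Y$ of the right adjoint of $e$, so $e(\iota(Y)) \subseteq Y$ by the counit inequality). Hence $\iota(Y)$ is the maximum of $\{b \in \mathbb{A} \mid e(b) \subseteq Y\}$. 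Applying the monotone function $\mu$ and using that a sup of monotone-image of a set with a maximum is attained at that maximum, one obtains $\mu_\ast(Y) = \mu(\iota(Y))$.

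The second identity is entirely dual: $\gamma(Y) \in \mathbb{A}$ by closure of $\mathbb{A}$ under finite intersections, and it is the minimum of $\{b \in \mathbb{A} \mid Y \subseteq e(b)\}$ (this is the unit inequality $Y \subseteq e(\gamma(Y))$ of the adjunction $e \dashv \gamma$ together with minimality by construction). Monotonicity of $\mu$ then gives $\mu^\ast(Y) = \mu(\gamma(Y))$.

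I do not anticipate any genuine obstacle. The only point that requires a moment of care is verifying that $\iota(Y)$ and $\gamma(Y)$ actually lie in $\mathbb{A}$, i.e.\ that they are legitimate arguments of $\mu$. This is immediate from finiteness of $\mathbb{A}$ (itself a consequence of finiteness of $S$), but it is worth isolating because it is exactly the step where the ``finite'' hypothesis of the lemma is used. In an infinite probability space one would have to replace $\iota(Y)$ and $\gamma(Y)$ by suitable approximants inside $\mathbb{A}$, which is why the statement is restricted to the finite case.
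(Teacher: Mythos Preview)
Your proposal is correct and follows essentially the same approach as the paper: both arguments observe that $\iota(Y)$ is the maximum of $\{b\in\mathbb{A}\mid e(b)\subseteq Y\}$ (which lies in $\mathbb{A}$ by finiteness), so that the supremum defining $\mu_\ast(Y)$ is attained at $\mu(\iota(Y))$, and dually for $\mu^\ast$. The paper's proof is terser and invokes ``additivity of $\mu$'' where you more precisely isolate monotonicity; your more explicit identification of where finiteness enters is a welcome clarification.
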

\begin{proof}
We only show the first identity. From the definitions of $\mu_*, \iota$ and the additivity of $\mu$, we get: 
$
\mu_\ast(Y)  = \bigvee\{\mu(a)\mid a\in \mathbb{A} \text{ and } e(a)\subseteq Y\}
 = \mu (\bigcup\{a\mid a\in \mathbb{A} \text{ and } e(a)\subseteq Y\})
 = \mu (\iota( Y)).
$
\end{proof}
The next proposition is an algebraic reformulation of \cite[Theorem 3.3]{fagin1991uncertainty}.

\begin{theorem}
	For any belief function $\bel:\mathcal{P}(S)\to[0,1]$ on a finite set $S$, there exists a  finite probability space $\mathbb{X} = (S', \mathbb{A}, \mu)$ and a Boolean algebra embedding $h:\mathcal{P}(S)\hookrightarrow \mathcal{P}(S')$ such  that $\bel(X)=\mu_*(h(X))$.
\end{theorem}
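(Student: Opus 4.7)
The plan is to construct the probability space $\mathbb{X} = (S', \mathbb{A}, \mu)$ and the embedding $h$ explicitly from the mass function $\mass := \mass_\bel$ associated to $\bel$, in such a way that Lemma~\ref{lemma: from partial probability to modal operators} directly yields the identity $\bel(X) = \mu_\ast(h(X))$. Let $\mathcal{F} := \{F \subseteq S : \mass(F) > 0\}$ denote the (finite, nonempty) collection of focal elements, and set
\begin{equation*}
S' := \{(s,F) \in S \times \mathcal{F} : s \in F\},
\qquad h(X) := \{(s,F) \in S' : s \in X\}.
\end{equation*}
A direct verification shows that $h$ preserves $\cap$, $\cup$, complement, $\varnothing$ and $S$, hence is a Boolean algebra homomorphism. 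Injectivity boils down to the condition $\bigcup \mathcal{F} = S$, which, if not already the case, can be enforced by adjoining $S$ itself to $\mathcal{F}$ with mass $0$.

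Next, for each $F \in \mathcal{F}$ let $A_F := F \times \{F\}$; the family $\{A_F : F \in \mathcal{F}\}$ is a partition of $S'$. Take $\mathbb{A} \subseteq \mathcal{P}(S')$ to be the (finite, hence automatically $\sigma$-) Boolean subalgebra generated by this partition, and define
\begin{equation*}
\mu\Bigl( \bigcup_{F \in \mathcal{G}} A_F \Bigr) := \sum_{F \in \mathcal{G}} \mass(F) \qquad (\mathcal{G} \subseteq \mathcal{F}).
\end{equation*}
Since $\sum_{F \in \mathcal{F}} \mass(F) = \bel(S) = 1$, this is a probability measure on $(S', \mathbb{A})$.

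Finally, I would apply Lemma~\ref{lemma: from partial probability to modal operators} to obtain $\mu_\ast(h(X)) = \mu(\iota(h(X)))$, where $\iota$ is the left adjoint of the inclusion $\mathbb{A} \hookrightarrow \mathcal{P}(S')$. Since the $A_F$ are the atoms of $\mathbb{A}$ and $A_F \subseteq h(X)$ iff $F \subseteq X$, we get $\iota(h(X)) = \bigcup \{A_F : F \in \mathcal{F},\ F \subseteq X\}$; applying $\mu$ yields $\sum_{F \subseteq X} \mass(F) = \bel(X)$, as required.

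The main difficulty is conceptual rather than computational: one must design $S'$ so that $h$ is a genuine Boolean algebra embedding while the subalgebra $\mathbb{A}$ is just large enough to carry a probability measure tracking $\mass$, and no larger. The product-like construction $\{(s,F) : s \in F \in \mathcal{F}\}$ strikes this balance because the second coordinate carries the ``focal'' information seen by $\mathbb{A}$ and $\mu$, while the first coordinate carries the set-theoretic information seen by $h$; the two coordinates interact precisely through the condition ``$s \in F$'', which is what translates $A_F \subseteq h(X)$ into $F \subseteq X$ in the final step.
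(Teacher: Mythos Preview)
Your argument is correct and follows essentially the same construction as the paper: the paper takes $S' = \{(X,u)\mid X\subseteq S,\ u\in X\}$ using \emph{all} subsets rather than only the focal ones, which automatically guarantees that $h$ is injective without needing your adjoin-$S$ fix, but otherwise the definitions of $h$, the atoms of $\mathbb{A}$, and $\mu$ are identical. One small slip: $\iota$ is the \emph{right} adjoint of the inclusion $e:\mathbb{A}\hookrightarrow\mathcal{P}(S')$, not the left; your computation of $\iota(h(X))$ is nonetheless correct.
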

\begin{proof}
Let $S':=\{(X,u)\mid X\subseteq S\ \text{ and }\ u\in X\}$, and let $\mathbb{A}$ be the Boolean subalgebra of $\mathcal{P}(S')$ generated by $\{X^*\mid X \in\mathcal{P}(S)\}$, where  $X^*:=\{(X,u)\mid u\in X\}$ for every $X\in\mathcal{P}(S)$.
	Since $S'$ is finite - as discussed above - we  can assume that $\bel$ arises from a mass function $\mass$ on $\mathcal{P}(S)$. 

	Notice that, for every $X,Y\in\mathcal{P}(S)$, if $X\neq Y$, then $X^*\cap Y^*=\varnothing$.
	Hence, the elements $\{X^*\mid X \in\mathcal{P}(S)\}$ are the atoms of $\mathbb{A}$. Therefore, we can define the probability measure $\mu: \mathbb{A}\to [0, 1]$ by taking  $\mu(X^*): =\mass(X)$ for any $X\in \mathcal{P}(S)$, and then extending it by additivity to the whole domain of  $\mathbb{A}$.

Let  $h:\mathcal{P}(S)\to \mathcal{P}(S')$ be defined by the assignment $h(X):=\{(Y,u)\in S'\mid u\in X\}$. It is routine to check that $h$ is an injective Boolean algebra homomorphism.

	Finally, let us show that $\mu_*(h(X))=\bel(X)$ for any $X\in\mathcal{P}(S)$. 
Notice that we have
\begin{align*}
\mu_*(h(X)) & = \mu ( \iota (h(X))) = 
\mu ( \bigcup\{a\in \mathbb{A}\mid e(a)\subseteq h(X)\} ) 
 =  
\mu ( \bigcup\{Y^* \mid Y \in \mathcal{P}(S) \text{ and } Y^* \subseteq h(X)\} ) .
\end{align*}
Indeed, since $\mathbb{A}$ is a finite Boolean algebra, every $a \in \mathbb{A}$ is join-generated by the atoms below it.
Since $Y^*\subseteq h(X)$ iff $Y\subseteq X$ and since  every two distinct generators are disjoint, we have  that $\mu(W)=\sum_{Y\subseteq X}\mu(Y^*)=\sum_{Y\subseteq X}\mass(Y)=\bel(X)$, as required.
	
\end{proof}

\section{Conceptual $\DS$-structures and conceptual probability spaces}
\label{sec:conceptual:DS:structures}
	
In this section, we generalise the result of \cite[Theorem~3.3]{fagin1991uncertainty} from probability spaces to formal concepts. 
In Section \ref{ssec:DS:structures},
we provide preliminaries on formal concepts, introduce 
conceptual $DS$-structures and generalise the notions of belief functions and probability spaces for conceptual $DS$-structures.
In Section \ref{ssec:theorem}, we prove that belief and plausibility functions on conceptual $DS$-structures can be represented as the inner and outer measures of some probability space. We  provide both a purely algebraic proof and a frame theoretic proof of that result.
In Section \ref{ssec:combining}, we show how to adapt Dempster-Shafer combination of evidence to formal concepts.
	
	\subsection{Preliminaries and definitions}
	\label{ssec:DS:structures}
	\paragraph{Formal contexts and their concept lattices.}
A {\em formal context} \cite{ganter2012formal}  is a structure $\mathbb{P} = (A, X, I)$ such that $A$ and $X$ are sets, and $I\subseteq A\times X$ is a binary relation. 
Formal contexts can be thought of as abstract representations of databases, where elements of $A$ and $X$ represent objects and features, respectively, and the relation $I$ records whether a given object has a given feature. 
Every formal context as above induces maps $(\cdot)^\uparrow: \mathcal{P}(A)\to \mathcal{P}(X)$ and $(\cdot)^\downarrow: \mathcal{P}(X)\to \mathcal{P}(A)$, respectively defined by the assignments 
\begin{equation}
B^\uparrow: = 
\{x\in X\mid \forall a(a\in B\Rightarrow aIx)\}\quad\text{ and }\quad 
Y^\downarrow: = 
\{a\in A\mid \forall x(x\in Y\Rightarrow aIx)\}.
\end{equation}
A {\em formal concept} of $\mathbb{P}$ is a pair 
$c = (\val{c}, \descr{c})$ such that $\val{c}\subseteq A$, $\descr{c}\subseteq X$, and $\val{c}^{\uparrow} = \descr{c}$ and $\descr{c}^{\downarrow} = \val{c}$. 
A subset $B \subseteq A$ (resp.\ $Y\subseteq X$) is said to be {\em closed} if $B=B^{\uparrow\downarrow}$ (resp.\ $Y=Y^{\downarrow\uparrow}$).
The set of objects $\val{c}$ is intuitively understood as the {\em extension} of the concept $c$, while  the set of features $ \descr{c}$ is understood as its {\em intension}. 
The set $L(\mathbb{P})$  of the formal concepts of $\mathbb{P}$ can be partially ordered as follows: for any $c, d\in L(\mathbb{P})$, 
\begin{equation}
c\leq d\quad \mbox{ iff }\quad \val{c}\subseteq \val{d} \quad \mbox{ iff }\quad \descr{d}\subseteq \descr{c}.
\end{equation}
With this order, $L(\mathbb{P})$ is a complete lattice, the {\em concept lattice} $\mathbb{P}^+$ of $\mathbb{P}$. As is well known, any complete lattice $\mathbb{L}$ is isomorphic to the concept lattice $\mathbb{P}^+$ of some formal context $\mathbb{P}$. A  formal context  $\mathbb{P}$ is {\em finite} if its associated concept lattice $\mathbb{P}^+$ is a finite lattice.\footnote{Notice that if $\mathbb{P} = (A, X, I)$ is such that $A$ and $X$ are finite sets then $\mathbb{P}^+$ is a finite lattice, but the converse is not true in general. For instance, if $\mathbb{P} = (A, X, I)$ gives rise to 
a finite lattice, then so does
$\mathbb{P}' := (A', X, I')$ where $A': = A\cup \mathbb{N}$ and $a'I' x$ iff $a'\in A$ and $aIx$.} 

\paragraph{Conceptual $\DS$-structures and conceptual probability spaces.} The notions of $\DS$-structures and  probability spaces can be generalized from sets to polarities as follows.  
\begin{definition} 
\label{def:lattice:bel-func:pl-func}
If $\mathbb{P} = (A, X, I)$ is a (finite)  formal context, a \textit{mass function} on $\mathbb{P}$ is a map $\mass  : \mathbb{P}^+  \rightarrow [0,1]$ such that 
	$\sum_{c \in \mathbb{P}^+} \mass (c) = 1$, and if $\val{\bot} = \varnothing$ then $\mass (\bot)=0$. 
	
	A \textit{conceptual $\DS$-structure} is a tuple $\mathbb{D}=(\mathbb{P},\mass)$ such that $\mathbb{P}$ is finite formal context, and $\mass$ is a  mass function on $\mathbb{P}$. 
	\end{definition}

Notice that  for a formal context $\mathbb{P} = (A, X, I)$, the bottom element of $\mathbb{P}^+$ is defined as $\bot = (\val{\bot}, \descr{\bot})$, where $\descr{\bot}: = X$ and $\val{\bot}: = \descr{\bot}^{\downarrow} = \{a\in A\mid \forall x (a I x)\}$. 
Hence, the extension of $\bot$  might in some cases be  nonempty. If so, it is implausible to require the mass of $\bot$ to always be zero. This explains the requirement that $\mass (\bot) = 0$ applies only if $\val{\bot} = \varnothing$. 

Belief and plausibility functions arise from conceptual $\DS$-structures as follows.
\begin{definition} 
\label{def:from mass to bel}
For any conceptual $\DS$-structure $\mathbb{D}=(\mathbb{P},\mass)$, let  $\bel_\mass :  \mathbb{P}^+ \rightarrow [0,1]$ and $\pl_\mass :  \mathbb{P}^+ \rightarrow [0,1]$ be defined by the following assignments:  for every $c \in \mathbb{P}^+$, 
\begin{align}
\label{eq:def:bl:pl}
\bel_\mass(c) := \sum_{c' \leq c} \mass (c')
\qquad \text{ and } \qquad  
\pl_\mass(c) := \sum_{\val{c' \wedge c} \neq \varnothing}  \mass (c').
\end{align}
\end{definition} 

\begin{definition} 
\label{def:concept:PS} A {\em conceptual probability space} is a structure $\mathbb{X} = (\mathbb{P}, \mathbb{A}, \mu)$ where $\mathbb{P}$ is a  finite formal context, $\mathbb{A}$ is a $\sigma$-{\em algebra of concepts of} $\mathbb{P}$, i.e.~a  lattice embedding $e: \mathbb{A}\hookrightarrow \mathbb{P}^+$ exists of $\mathbb{A}$ into the concept lattice of $\mathbb{P}$, and $\mu: \mathbb{A}\to [0, 1]$ is a countably additive probability measure.   \end{definition} 

In any conceptual probability space $\mathbb{X}$ as above, the  embedding $e: \mathbb{A}\hookrightarrow \mathbb{P}^+$ is a {\em complete} lattice homomorphism, and hence, similarly to the Boolean setting, the right and left adjoints of $e$ exist, denoted $\iota, \gamma:  \mathbb{P}^+\twoheadrightarrow \mathbb{A}$ respectively, and defined as  
\begin{equation}
\iota( c) := \bigvee\{a\in \mathbb{A}\mid e(a)\leq c\}\quad\text{ and }\quad \gamma (c): = \bigwedge\{a\in \mathbb{A}\mid c\leq e(a)\}.
\end{equation}
Using these maps, we can define the {\em inner} and {\em outer measures} $\mu_\ast, \mu^\ast: \mathbb{P}^+\to [0, 1]$ as follows: 
for every  $c\in \mathbb{P}^+$,
\begin{align}
\mu_\ast(c) = \mu(\iota(c))\; \mbox{  and } \;\mu^\ast(c) = \mu(\gamma(c)).
\end{align}

	\subsection{Representing conceptual $\DS$-structures as conceptual probability spaces}
	\label{ssec:theorem}
	The aim of this section is to prove Theorem \ref{th:3.3forlat}. We  provide both a purely algebraic proof and a frame theoretic proof of that result.
The former highlights why the construction provides 
belief and plausibility functions, while the latter allows the reader to understand the structure of conceptual probability space on which the inner and outer measures are interpreted.

\begin{theorem}\label{th:3.3forlat}
	For any conceptual DS-structure $\mathbb{D}=(\mathbb{P},\mass)$ there exists a  finite conceptual probability space $\mathbb{X} = (\mathbb{P}', \mathbb{A}, \mu)$, and a meet-preserving embedding $h:{\mathbb{P}}^+\hookrightarrow \mathbb{P}'^+$ such  that, for every $c \in \mathbb{P}^+$, we have $\bel_\mass(c)=\mu_*(h(c))$ and $\pl_\mass(c)=\mu^*(h(c))$.
\end{theorem}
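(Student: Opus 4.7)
The plan is to mimic the Fagin--Halpern construction from the previous section, replacing the labelled sets $X^*$ there with labelled \emph{concepts} $c^*$ in a suitably enlarged formal context. Concretely, I would define $\mathbb{P}' = (A', X', I')$ by
\[
A' := \{(c, a) \mid c \in \mathbb{P}^+, \, a \in \val{c}\}, \qquad X' := X \cup \mathbb{P}^+,
\]
with $(c, a) I' x$ iff $a I x$ for $x \in X$, and $(c, a) I' d$ iff $c \neq d$ for $d \in \mathbb{P}^+$. Intuitively, elements of $\mathbb{P}^+$ are added as new ``label features'', and each object $(c, a)$ possesses every such feature except its own label $c$. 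Since $\mathbb{P}^+$ is finite, so is $\mathbb{P}'$.

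For each $c \in \mathbb{P}^+$, I would check that $\{c\}\times\val{c}$ is closed in $\mathbb{P}'$: its polarity image is $\descr{c}\cup(\mathbb{P}^+\setminus\{c\})$, and the distinguishing features $d \neq c$ force any $(e, b)$ in the resulting closure to satisfy $e = c$, giving back $\{c\}\times\val{c}$. Call the resulting concept $c^*$. The same argument shows that for every $S \subseteq \mathbb{P}^+$ the set $\bigcup_{c \in S}\{c\}\times\val{c}$ is closed in $\mathbb{P}'$ (this is the key closure observation, relying on the identity $(\bigcup_{c\in S}\val{c})^\uparrow = \descr{\bigvee S}$ combined with the behavior of the label features), so that the assignment $e(S) := \bigvee_{c \in S} c^*$ has extension $\bigcup_{c \in S}\val{c^*}$ and defines an injective lattice homomorphism $\mathcal{P}(\mathbb{P}^+) \hookrightarrow \mathbb{P}'^+$. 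Taking $\mathbb{A}$ to be its image and $\mu(S) := \sum_{c \in S}\mass(c)$ produces a conceptual probability space as in Definition~\ref{def:concept:PS}.

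For the embedding $h$, I would take $h(c)$ to be the concept with extension $\{(d, b) \in A' \mid b \in \val{c}\} = \bigcup_{d \in \mathbb{P}^+}\{d\}\times\val{c\wedge d}$. Closure and injectivity follow routinely (the latter by recovering $\val{c}$ from the fiber over $d = \top$), and meet-preservation is immediate from the componentwise identity
\[
\val{h(c_1)} \cap \val{h(c_2)} = \{(d, b) : b \in \val{c_1}\cap\val{c_2}\} = \val{h(c_1 \wedge c_2)}.
\]

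The final step is computing the adjoints at $h(c)$. For $\iota$: $e(S) \leq h(c)$ iff for every $d \in S$ and every $b \in \val{d}$ one has $b \in \val{c}$, i.e.~$d \leq c$ (vacuously true when $\val{d} = \varnothing$, which forces $d = \bot$ since a concept is determined by its extension); using $\mass(\bot) = 0$ when $\val{\bot} = \varnothing$, this yields $\mu(\iota(h(c))) = \sum_{d \leq c}\mass(d) = \bel_\mass(c)$. For $\gamma$: $h(c) \leq e(S)$ iff every $(d, b)$ with $b \in \val{c \wedge d}$ belongs to $\val{e(S)} = \bigcup_{d' \in S}\{d'\}\times\val{d'}$, which by matching labels forces $\{d : \val{c \wedge d} \neq \varnothing\} \subseteq S$; hence $\mu(\gamma(h(c))) = \sum_{\val{c \wedge d} \neq \varnothing}\mass(d) = \pl_\mass(c)$. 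I expect the main obstacle to be the closure computation for the unions $\bigcup_{c \in S}\{c\}\times\val{c}$: this is exactly what allows the $c^*$'s to behave as Boolean-algebra atoms inside $\mathbb{P}'^+$, and the adjoint computations rest on it.
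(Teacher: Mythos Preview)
Your construction is essentially the paper's frame-theoretic proof: the same $A'$, the same $c^*$, the same Boolean subalgebra $\mathbb{A}$ generated by the $c^*$, the same $\mu$, the same $h$, and the same adjoint computations, the only cosmetic difference being that you take $X' = X \sqcup \mathbb{P}^+$ with separate ``label'' features, whereas the paper takes $X' = \mathbb{P}^+ \times X$ with $(c,a)I'(d,x)\Leftrightarrow c\neq d$ or $aIx$ (and reduces without loss of generality to $X^\downarrow=\varnothing$). One small slip: when $\val{\bot}=\varnothing$ your map $\mathcal{P}(\mathbb{P}^+)\to \mathbb{P}'^+$ is not injective, since then $\bot^*=\bot_{\mathbb{P}'^+}$ and $e(\{\bot\})=e(\varnothing)$; but as you already note $\mass(\bot)=0$ in that case, so taking $\mathbb{A}$ to be the \emph{image} (equivalently, restricting to $\mathcal{P}(\{c:\val{c}\neq\varnothing\})$) repairs this without affecting anything downstream.
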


\commentSabine{
\redbf{---------}

\redbf{Apostolos: can you write the proof here.}

\paragraph{Algebraic proof of Theorem \ref{th:3.3forlat}.}
Let $\mathbb{D}=(\mathbb{P},\mass)$ be a finite conceptual DS-structure. 
Let $\mathbb{B}$ be the lattice of concepts associated to the formal context $\mathbb{P}$. Notice that  $\mathbb{A}$ is  finite. By definition, we have $\mass:\mathbb{A}\to[0,1]$ such that $\sum_{a\in\mathbb{A}}\mass(a)=1$. For every $a\in\mathbb{A}$ define $\mathbb{A}_a=\{b\in\mathbb{A}\mid b\leq a\}$ as a sublattice of $\mathbb{A}$.

For every $b\in\mathbb{A}$ define $b_a\in\mathbb{A}_a$ where $b_a=b\land a$.

Take $\coprod_{a\in \mathbb{A}}\mathbb{A}_a$ the poset \redbf{obtained by the disjoint union of the lattices $\mathbb{A}_a$}. Let $\mathbb{A}'$ be the free join-semilattice generated by  $\coprod_{a\in \mathbb{A}}\mathbb{A}_a$. Since $\mathbb{A}'$ is a finite join-semilattice, it is also a lattice.

Let us show that the sublattice $\mathbb{B}$ generated by $\{\top_a\in\mathbb{A}'\mid a\in\mathbb{A}\}$ is a Boolean algebra. 
Notice that by definition $\{\top_a\in\mathbb{A}'\mid a\in\mathbb{A}\}$ are atoms of $\mathbb{B}$. 
Every element of $b\in\mathbb{B}$ can be written as $\bigvee_{a\in S}\top_a$ for some $S\subseteq \mathbb{A}$. 
Hence define $\lnot b=\bigvee_{a\in\mathbb{A}\smallsetminus S}\top_a$. It's immediate that $\lnot$ is a Boolean negation. This completes the proof that $\mathbb{B}$  is a Boolean algebra.

Similarly to the above, for any $b\in\mathbb{A}$ the sublattice of $\mathbb{A}'$ 
generated by $\{b_a\in\mathbb{A}'\mid a\in\mathbb{A}\}$ 
is a Boolean algebra and  its atoms are exactly $\{b_a\in\mathbb{A}'\mid a\in\mathbb{A}\}$. 

Define $h:\mathbb{A}\to\mathbb{A}'$ as $h(b)=\bigvee_{a\in\mathbb{A}}b_a$. 
Let us show that the map $h$ is meet-preserving. 

Notice that, for any   $b'\in\mathbb{A}'$, we have that $b'=\bigvee_{a\in\mathbb{A}}b_a$.
Let's show that 
\begin{align*}
h(c)\wedge h(d) = 
\bigvee_{a\in\mathbb{A}}c_a\land \bigvee_{a\in\mathbb{A}}d_a=
\bigvee_{a\in\mathbb{A}}(c\land d)_a = h(c \wedge d).
\end{align*}
By monotonicity, we have that $h(c\wedge d) \leq h(c) \wedge h(d)$.
We want to show that $ h(c) \wedge h(d) \leq h(c\wedge d) $.

Let $e =  \bigvee_{a\in\mathbb{A}}e_a\leq h(c)\land h(d)$.
Hence, 
$e\leq h(c)$ and $e\leq h(d)$. 

Since, ...
this holds if and only if, for every $a\in\mathbb{A}$, $e_a\leq c_a$ and $e_a\leq d_a$,

 i.e. $e_a\leq (c\land d)_a$, 
i.e. $e\leq \bigvee_{a\in\mathbb{A}}(c\land d)_a$.
\redfootnote{
Any element  $c\in\mathbb{A}$ can be written as a $c=\bigvee_{a\in\mathbb{A}}b^c_a$. 
Let's show that  $\bigvee_{a\in\mathbb{A}}b^c_a\land \bigvee_{a\in\mathbb{A}}b^d_a=\bigvee_{a\in\mathbb{A}}(b^c\land b^d)_a$. 
Let $e\leq \bigvee_{a\in\mathbb{A}}b^e_a\leq c\land d$, i.e. $e\leq c$ and $e\leq d$. 
By definition this holds if for every $a\in\mathbb{A}$ $b^e_a\leq b^c_a$ and $b^e_a\leq b^d_a$, i.e. $b^e_a\leq (b^c\land b^d)_a$, 
i.e. $e\leq \bigvee_{a\in\mathbb{A}}(b^c\land b^d)_a$.
}

The proofs for plausibility and belief are identical as the ones we have.   
}

\noindent\textbf{Algebraic proof of Theorem \ref{th:3.3forlat}.}
Let $\mathbb{D}=(\mathbb{P},\mass)$ be a finite conceptual DS-structure. 
Let $\mathbb{P}^+$ be the lattice of concepts associated to the formal context $\mathbb{P}$. Notice that  $\mathbb{P}^+$ is  finite. By definition, we have $\mass:\mathbb{P}^+\to[0,1]$ such that $\sum_{a\in\mathbb{P}^+}\mass(a)=1$. 

For every $a\in\mathbb{P}^+$, let $\mathbb{L}_a:=\{b\in\mathbb{P}^+\mid b\leq a\}$.  $\mathbb{L}_a$ is a sublattice of $\mathbb{P}^+$.
For every $b\in\mathbb{P}^+$, let $b_a\in\mathbb{L}_a$ be the element such that $b_a=b\land a$.
Let $\mathbb{L}':=\prod_{a\in \mathbb{A}}\mathbb{A}_a$ be the product of these lattices.
For every $b \in \mathbb{P}^+$, let $b^* \in \mathbb{L}'$ be defined as follows, for every $a \in \mathbb{P}^+$,
\begin{equation}
b^*(a) = \left\{
\begin{array}{ll}
b & \text{if } b=a,\\
\bot & \text{otherwise.}
\end{array}
\right.
\end{equation}

Let us show that the sublattice $\mathbb{A}$ generated by $\{b^*\in\mathbb{L}'\mid b\in\mathbb{P}^+\}$ is a Boolean algebra.
Notice that $\{b^*\in\mathbb{L}'\mid b\in\mathbb{P}^+\}$ are exactly the atoms of $\mathbb{A}$. 
Hence, every element  $c\in\mathbb{A}$ can be written as $\bigvee_{b\in S}b^*$ for some $S\subseteq \mathbb{P}^+$. 
Define $\lnot c:=\bigvee_{b\in\mathbb{P}^+\smallsetminus S}b^*$. 
It's immediate that $\lnot$ is a Boolean negation. This completes the proof that $\mathbb{A}$  is a Boolean algebra. Let $\mathbb{P}'$ be a finite formal context  associated to $\mathbb{L}'$. Hence, $\mathbb{A}$ is trivially a $\sigma$-algebra of concepts of $\mathbb{P}'$.
Let $\mu  :  \mathbb{A} \rightarrow [0,1]$ be defined as follows. For every $b \in \mathbb{P}^+$,  we take $\mu(b^*) := \mass(b)$. Then, we extend the map $\mu$ using the fact that every $a \in \mathbb{A}$ is the join of the $b^*$ below it and the fact that $\mu$ is additive. Then $\mathbb{X} := (\mathbb{P}', \mathbb{A}, \mu) $ is a finite conceptual probability space.
The embedding $e  : \mathbb{A} \rightarrow \mathbb{L}'$ is the natural embedding that sends each element in $\mathbb{A}$ to its corresponding element in $\mathbb{L}'$.


Let $h:\mathbb{P}^+\to\mathbb{L}'$ be such that 
\begin{align}
h(b)(a)=b_a = b \wedge a.
\end{align}
%
For every $a,c,d \in \mathbb{P}^+$, we have 
\begin{align*}
[h(c)\wedge h(d)](a) = 
c_a\land d_a=
(c\land a)\land (d\land a) = (c\land d)\land a=h(c \wedge d)(a).
\end{align*}
Hence, $h(c)\wedge h(d) = h(c \wedge d)$ and $h$ is meet-preserving.\\


Let us show that $\bel(c)=\mu_*(h(c))$ for every $c \in \mathbb{P}^+$. Since $\mathbb{A}$ is an atomic Boolean algebra, every element is equal to the join of the atoms below it.
Furthermore, notice that $d\leq c$ if and only if $d^*\leq h(c)$. Therefore, we have:
\begin{align*}
\mu_*(h(c)) & = \mu( \iota(h(c))) =
\mu \left(  
\bigvee \left\{a \in \mathbb{A} \mid e(a) \leq h(c)
\right\}
\right)
\\ 
& = \mu \left(  
\bigvee \left\{d^* \in \mathbb{A} \mid d\in \mathbb{P}^+ \; \text{ and } \; d^* \leq h(c)
\right\}
\right)
\tag{because each $a \in \mathbb{A}$ is equal to the join of the atoms below it}
\\
& = \mu \left(  
\bigvee \left\{d^* \in \mathbb{A} \mid d \leq c
\right\}
\right)
\\
& = \sum \{ \mu(d^* ) \mid  d \leq c \}
\tag{because $\mu$ is additive}
\\
& = \sum \{ \mass(d) \mid  d \leq c \}
\tag{by definition of $\mu$}\\
& = \bel_\mass(c).
\tag{by definition of $\bel$}
\end{align*}

Finally, let us show that  $\pl(c)=\mu^*(h(c))$ for every $c \in \mathbb{P}^+$. Let $U\subseteq{\mathbb{P}}^+$ and $c \in \mathbb{P}^+$. 
Notice that if $h(c)(d)\neq\bot$, then 
$h(c)(e) = c \wedge e \leq \bigvee_{d\in U}d^*(e)$ if and only if $e\in U$. Indeed, $\bigvee_{d\in U}d^*(e) = e$ if $e \in U$ and $\bot$ otherwise. 
We have:
\begin{align}
\bigwedge \left\{ \bigvee_{d\in U}d^* \; \middle| \; U\subseteq{\mathbb{P}}\ \text{ and } \ h(c)\leq \bigvee_{d\in U}d^*\right\}
& = \bigwedge \left\{
\bigvee_{d\in U}d^* \; \middle| \ U\subseteq{\mathbb{P}}\ \text{ and } \   \; d\land c\neq\bot \; \text{ implies } \; d\in U \right\}
\notag
\\
& =\bigvee\{d^*\mid d\land c\neq\bot\}.
\label{align:proof:alg:th:pl}
\end{align}  
The last equality holds because $\mathbb{A}$ is a Boolean algebra. Hence, we have:
\begin{align*}
\mu^*(h(c)) & = \mu(\gamma(h(c)))  
= \mu \left(  \bigwedge\{a\in \mathbb{A} \mid h(c)\leq e(a)\} \right)
\\
& =  \mu \left(\bigwedge
\left\{\bigvee_{d\in U}d^* \; \middle| \; U\subseteq{\mathbb{P}}\ \text{ and }\ h(c)\leq \bigvee_{d\in U}d^*\right\}\right)
\tag{because each $a \in \mathbb{A}$ is equal to the join of the atoms $d^*$ below it}
\\
& = \mu \left( \bigvee \{ d^* \mid c \land d \neq \bot \} \right)
\tag{see \eqref{align:proof:alg:th:pl}}
\\
& = \sum \{ \mu ( d^* ) \mid c \land {d} \neq \bot \} 
\tag{because $\mu$ is additive}
\\
& = \sum \{ \mass (d) \mid {c} \land {d} \neq \bot \} 
\tag{by definition of $\mu$}
\\
& = \pl_\mass (c).
 \tag{by definition of $\pl$}
\end{align*}

\medskip

\noindent\textbf{Frame theoretic proof of Theorem \ref{th:3.3forlat}.}
We proceed via a series of lemmas.
Let $\mathbb{P}=(A,X,I)$ be a finite formal context and ${\mathbb{P}}^+$ be its lattice of concepts. We can assume without loss of generality that $X^{\downarrow}=\varnothing$. We define the formal context $\mathbb{P}'=(A',X',I')$ as follows: 
\begin{itemize}
	\item $A' := \{(c,a) \mid c \in \mathbb{P}^+ \text{ and }  a\in \val{c} \}$;
	\item $X' := \{(c,x) \mid   c \in \mathbb{P}^+ \text{ and } x\in X \}$;
	\item $(c,a)I'(d,x)$ if and only if $c\neq d$   or    $ aIx$.
\end{itemize}
For every $c\in {\mathbb{P}}^+$, let $c^*=(\{(c,a)\mid a\in \val{c}\},\{(c,a)\mid a\in \val{c}\}^{\uparrow})$. 
\begin{lemma}
For every $c\in {\mathbb{P}}^+$, the set $\{(c,a)\mid a\in \val{c}\}$ is the extension of a formal concept.
\end{lemma}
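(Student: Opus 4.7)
The plan is to verify that $B := \{(c,a)\mid a\in \val{c}\}$ satisfies $B = B^{\uparrow\downarrow}$, where $\uparrow, \downarrow$ are the polarity operators of $\mathbb{P}'$. The inclusion $B \subseteq B^{\uparrow\downarrow}$ holds automatically because $(\cdot)^{\uparrow\downarrow}$ is a closure operator on $\mathcal{P}(A')$, so the real work is showing $B^{\uparrow\downarrow} \subseteq B$. I will carry this out in two explicit computations, applying the definition of $I'$ and a case split on whether the first coordinate equals $c$.

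First I would compute $B^{\uparrow}$. A pair $(d,x)\in X'$ lies in $B^{\uparrow}$ iff $(c,a)I'(d,x)$ for every $a\in\val{c}$, i.e.\ iff for every such $a$ we have $d\neq c$ or $aIx$. When $d\neq c$ the condition is vacuously true, so $(d,x)\in B^{\uparrow}$. When $d=c$ the condition reduces to $a I x$ for all $a\in\val{c}$, i.e.\ $x\in\val{c}^{\uparrow}=\descr{c}$. Hence
\begin{equation}
B^{\uparrow} \;=\; \{(d,x)\in X'\mid d\neq c\}\ \cup\ \{(c,x)\mid x\in\descr{c}\}.
\end{equation}

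Next I would compute $B^{\uparrow\downarrow}$. A pair $(e,b)\in A'$ lies in $B^{\uparrow\downarrow}$ iff $(e,b)I'(d,x)$ for every $(d,x)\in B^{\uparrow}$. Split on $e$. If $e=c$, then for pairs of the form $(d,x)\in B^{\uparrow}$ with $d\neq c$ the relation $I'$ is automatic, and for pairs $(c,x)$ with $x\in\descr{c}$ we need $bIx$; since $b\in\val{e}=\val{c}$ already forces $bIx$ for every $x\in\descr{c}$, every $(c,b)\in A'$ lies in $B^{\uparrow\downarrow}$. If $e\neq c$, then for every $x\in X$ the pair $(e,x)$ lies in $B^{\uparrow}$, so we would need $bIx$ for every $x\in X$, i.e.\ $b\in X^{\downarrow}=\varnothing$, which is impossible. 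Combining the two cases yields $B^{\uparrow\downarrow}=\{(c,b)\mid b\in\val{c}\}=B$, as required.

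The only genuinely delicate point is the second case in the computation of $B^{\uparrow\downarrow}$: without the assumption $X^{\downarrow}=\varnothing$ made just before the lemma, spurious elements $(e,b)$ with $e\neq c$ could appear in $B^{\uparrow\downarrow}$ and wreck the identity. So the main obstacle, such as it is, is bookkeeping: one must be scrupulous about the precise definition of $I'$ (note the $c\neq d$ \emph{or} $aIx$ is a disjunction) and must remember to invoke $X^{\downarrow}=\varnothing$ exactly at the point where it is needed. The rest is a direct unfolding of definitions.
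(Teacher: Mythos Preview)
Your proof is correct and follows essentially the same approach as the paper: both compute $B^{\uparrow}$ explicitly, then use the assumption $X^{\downarrow}=\varnothing$ to rule out elements $(e,b)$ with $e\neq c$ from $B^{\uparrow\downarrow}$. The only cosmetic difference is that the paper, after computing $B^{\uparrow}$, invokes antitonicity of $(\cdot)^{\downarrow}$ to bound $B^{\uparrow\downarrow}$ above by $\{(d,x)\mid d\neq c\}^{\downarrow}$ and then computes that smaller set, whereas you compute $B^{\uparrow\downarrow}$ directly via the same case split; the underlying argument is identical.
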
 
\begin{proof}
	Notice that $\{(c,a)\mid a\in \val{c}\}^{\uparrow}=\{(d,x)\in X'\mid d\neq c\}\cup\{(c,x)\in X'\mid x\in\descr{c}\}$. Since, by assumption, $X^{\downarrow}=\varnothing$, it follows that $\{(d,x)\in X'\mid d\neq c\}^{\downarrow}=\{(c,a)\in A'\mid a\in\val{c}\}$. Since the map $(\cdot)^{\downarrow}$ is antitone, $\{(c,a)\mid a\in \val{c}\}^{\uparrow\downarrow}\leq \{(d,x)\in X'\mid d\neq c\}^{\downarrow}=\{(c,a)\in A'\mid a\in\val{c}\}$.
\end{proof}
The lemma above  implies that  $c^* \in \mathbb{P'}^+$. Let $\mathbb{A}$ be the sub join-semilattice of $\mathbb{P'}^+$ join-generated by $\{c^*\mid c\in{\mathbb{P}}^+\}$. 

\begin{lemma}
The lattice $\mathbb{A}$ is a finite Boolean algebra, generated by the set of its atoms $\{c^*\mid c\in{\mathbb{P}}^+\}$
\end{lemma}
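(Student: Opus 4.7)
The plan is to show that the generators $\{c^*\mid c\in{\mathbb{P}}^+\}$ behave exactly like atoms of a powerset Boolean algebra inside $\mathbb{P'}^+$: they are pairwise disjoint, each nonzero, and the join-semilattice they generate is closed under meets and complements and is isomorphic to $\mathcal{P}(\mathbb{P}^+)$.

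The first step is to identify the bottom of $\mathbb{P'}^+$. Since the extension of $\bot$ in $\mathbb{P'}^+$ is $(X')^\downarrow$, I would unfold the definition of $I'$ to show that $(c,a)\in (X')^\downarrow$ forces $(c,a)I'(c,x)$ for every $x\in X$, which in turn forces $a\in X^\downarrow$. Using the standing assumption $X^\downarrow=\varnothing$, this yields $\val{\bot}=\varnothing$. Next, for distinct $c,d\in\mathbb{P}^+$, the extensions $\val{c^*}=\{(c,a)\mid a\in\val{c}\}$ and $\val{d^*}=\{(d,a)\mid a\in\val{d}\}$ are disjoint in $A'$ because their first coordinates differ, so $\val{c^*\wedge d^*}=(\val{c^*}\cap\val{d^*})^{\uparrow\downarrow}=\varnothing^{\uparrow\downarrow}=\val{\bot}$, giving $c^*\wedge d^*=\bot$ in $\mathbb{P'}^+$.

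The key computational step, and the one I expect to be the main obstacle, is to show that for every $S\subseteq\mathbb{P}^+$, the set $B_S:=\bigcup_{c\in S}\val{c^*}$ is itself the extension of a concept, so that $\bigvee_{c\in S}c^*$ has extension exactly $B_S$. I would compute $B_S^\uparrow$ directly from the definition of $I'$: a pair $(d,x)$ lies in $B_S^\uparrow$ iff for every $c\in S$ and every $a\in\val{c}$ we have $c\neq d$ or $aIx$. This splits into two cases: $d\notin S$, which imposes no constraint, or $d\in S$, which forces $x\in\descr{d}$. Intersecting with $A'$ and going back down with $(\cdot)^\downarrow$, the case $d\notin S$ together with $X^\downarrow=\varnothing$ eliminates all pairs $(c,a)$ with $c\notin S$, while the case $d\in S$ confines the remaining pairs to $a\in\val{c}$. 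This yields $B_S^{\uparrow\downarrow}=B_S$, proving closedness.

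Once this is established, the map $S\mapsto\bigvee_{c\in S}c^*$ is an order isomorphism from $\mathcal{P}(\mathbb{P}^+)$ onto $\mathbb{A}$, since $S\subseteq T$ iff $B_S\subseteq B_T$ iff $\bigvee_{c\in S}c^*\leq\bigvee_{c\in T}c^*$. Using disjointness of the $\val{c^*}$, the meet in $\mathbb{P'}^+$ computes as $B_S\cap B_T=B_{S\cap T}$, which is closed by the previous step, so $\mathbb{A}$ is closed under meets and the isomorphism above is a lattice isomorphism with $\mathcal{P}(\mathbb{P}^+)$. Hence $\mathbb{A}$ is a finite Boolean algebra, with complement given by $\bigvee_{c\in S}c^*\mapsto\bigvee_{c\notin S}c^*$, and its atoms are precisely the singleton joins $c^*$ for $c\in\mathbb{P}^+$.
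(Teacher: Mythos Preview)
Your proof is correct and follows the same route as the paper's: both arguments reduce the claim to showing that for every $S\subseteq\mathbb{P}^+$ the union $B_S=\bigcup_{c\in S}\val{c^*}$ is Galois-closed in $A'$, using the pairwise disjointness of the $\val{c^*}$ together with the standing assumption $X^\downarrow=\varnothing$, after which the join-semilattice generated by the $c^*$ is visibly a powerset algebra. One harmless imprecision, which the paper's own formulation shares, is that $\bot^*=\bot$ is not an atom, so your map $S\mapsto\bigvee_{c\in S}c^*$ is an isomorphism from $\mathcal{P}(\mathbb{P}^+\setminus\{\bot\})$ rather than from $\mathcal{P}(\mathbb{P}^+)$; this does not affect the conclusion.
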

\begin{proof}
That the generators are atoms follows immediately from the fact that if $c\neq d$, then $\val{c^*}\cap\val{d^*}=\emptyset$.
To show that $\mathbb{A}$ is a Boolean algebra, it is enough to show that for any $U\subseteq{\mathbb{P}}^+$, the set $\bigcup_{c\in U}\val{c^*}$ is a closed subset of $A'$. We have $(\bigcup_{c\in U}\val{c^*})^{\uparrow}=\bigcap_{c\in U}\descr{c^*}$. As discussed in the lemma above, $\descr{c^*}=\{(d,x)\in X'\mid d\neq c\}\cup\{(c,x)\in X'\mid x\in\descr{c}\}$. Therefore $\bigcap_{c\in U}\descr{c^*}\supseteq\{(d,x)\in X'\mid d\notin U \}$. Since $X^{\downarrow}=\varnothing$, it follows that  $\{(d,x)\in X'\mid d\notin U \}^{\downarrow}=\{(c,a)\in A'\mid c\in U\}=\bigcup_{c\in U}\val{c^*}$. Therefore $(\bigcup_{c\in U}\val{c^*})^{\uparrow\downarrow} = (\bigcap_{c\in U}\descr{c^*})^{\downarrow}\subseteq \bigcup_{c\in U}\val{c^*}$, as required.
\end{proof}

\begin{remark} 
The reader familiar with the proof of \cite[Theorem~3.3]{fagin1991uncertainty} will notice that the  algebra $\mathbb{P}'^+$ is  comparatively as large as the one used to prove \cite[Theorem~3.3]{fagin1991uncertainty}. Had we chosen to define  $\mathbb{P}'^+$ in a simpler way, e.g.~as the power set Boolean algebra over the set of concepts $\mathcal{P}(P^+)$, we would have run into problems because $\mathcal{P}(P^+)$ would have forced inner and outer measures, hence belief and plausibility functions, to coincide. 
Indeed, in principle the $\sigma$-algebra with an arbitrary measure arising from a mass function on $P^+$ needs to have size at least $\mathcal{P}(P^+)$. 
%
\end{remark}

  Since $\mathbb{A}$ is a Boolean algebra, we can  define the map $\mu  :  \mathbb{A} \rightarrow [0,1]$ 
first on the generators of $\mathbb{A}$, by letting
$\mu(c^*)=\mass(c)$ for every $ c\in \mathbb{P}^+$, and  uniquely extend it to a measure on $\mathbb{A}$. Let $e: \mathbb{A}\hookrightarrow \mathbb{P}'^+ $ be the natural embedding. By construction, $e$ is a lattice homomorphism, hence the right and left adjoint of $e$ exist, denoted   $\iota$ and $\gamma$, respectively. 
%
%

Let us define the map $h:{\mathbb{P}}^+\hookrightarrow \mathbb{P}'^+$ as follows: 
for every  $c \in \mathbb{P}^+$,
\begin{equation}
h(c)=(\{(d,a)\in A'\mid a\in\val{c}\},\{(d,a)\in A'\mid a\in\val{c}\}^{\uparrow}).
\end{equation}
In the two lemmas below we show that $h$ is indeed well defined:
\begin{lemma} 
	$\{ (d,a) \mid a \in  \val{c}\}^{\uparrow}=\{(e,x)\in X'\mid x\in(\val{e}\cap\val{c})^{\uparrow}\}$.
\end{lemma}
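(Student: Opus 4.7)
The plan is to carry out a direct unpacking of definitions on both sides, and then verify set equality by noting that the relation $I'$ is essentially ``universally true'' across different first coordinates, so the only nontrivial constraints come from pairs sharing the first coordinate.

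First, I would expand the left-hand side using the definition of $(\cdot)^{\uparrow}$ in the context $\mathbb{P}'$. By definition, $(e,x) \in \{(d,a) \mid a \in \val{c}\}^{\uparrow}$ if and only if $(d,a) I' (e,x)$ holds for every $(d,a) \in A'$ with $a \in \val{c}$. Next, I would invoke the definition of the relation $I'$, namely that $(d,a) I' (e,x)$ holds iff $d \neq e$ or $a I x$. This means the universal condition is automatically satisfied whenever $d \neq e$, so the only pairs $(d,a)$ that impose an actual constraint are those with $d = e$.

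Once this observation is in place, the condition reduces to: for every $a$ such that $(e,a) \in A'$ and $a \in \val{c}$, we have $aIx$. Since membership $(e,a) \in A'$ is by definition equivalent to $a \in \val{e}$, this reduces further to: for every $a \in \val{e} \cap \val{c}$, $aIx$. By the definition of $(\cdot)^{\uparrow}$ in the original context $\mathbb{P}$, this says precisely $x \in (\val{e} \cap \val{c})^{\uparrow}$, which is the right-hand side.

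The computation is essentially routine definition-chasing, and there is no serious obstacle — the only subtle bookkeeping point is to remember that ranging over $(d,a) \in A'$ implicitly constrains $a \in \val{d}$, so when we restrict to $d = e$ the constraint $a \in \val{e}$ appears automatically and combines with $a \in \val{c}$ to yield $a \in \val{e} \cap \val{c}$. With this point clear, the two inclusions both follow from the biconditional chain above.
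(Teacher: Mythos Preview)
Your proof is correct and follows essentially the same approach as the paper: both arguments hinge on the observation that $I'$ is automatically satisfied whenever the first coordinates differ, so the only nontrivial constraint comes from pairs with $d=e$, reducing the condition to $x\in(\val{e}\cap\val{c})^{\uparrow}$. The paper presents this via two separate inclusions with an explicit case split, whereas you package it as a single biconditional chain, but the underlying reasoning is identical.
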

\begin{proof}
	Left to right inclusion: Assume $(e,x)\in \{ (d,a) \mid a \in \val{c}\}^{\uparrow}$. If $e\neq d$ for all $(d,a)\in  \{ (d,a) \mid a \in  \val{c}\}$, then $\val{e}\cap\val{c}=\emptyset$, hence $x\in(\varnothing)^{\uparrow}$. If $d=e$, then $aIx$ for all $a\in \val{e}\cap\val{c}$, hence $x\in (\val{e}\cap\val{c})^{\uparrow}$. 
	
	Right to left inclusion: Let $(e,x)\in \{(e,x)\in X'\mid x\in(\val{e}\cap\val{c})^{\uparrow}\}$ and let $(e,a)\in  \{ (d,a) \mid a \in \val{c}\}$. Since $x\in (\val{e}\cap\val{c})^{\uparrow}$ by assumption, it follows that $aIx$.
\end{proof}

\begin{lemma}
	$\{(e,x)\in X'\mid x\in(\val{e}\cap\val{c})^{\uparrow}\}^{\downarrow}\subseteq\{(d,a)\in A'\mid x\in \val{c}\}$
\end{lemma}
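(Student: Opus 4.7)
The right-hand side should read $\{(d,a)\in A'\mid a\in \val{c}\}$ (the letter $x$ in the displayed statement is a typo, since $x$ is not otherwise bound). My plan is to unfold the definition of the downward arrow $(\cdot)^{\downarrow}$ with respect to $I'$, specialise the universal quantifier over $(e,x)$ to the choice $e=d$, and then observe that $\val{d}\cap\val{c}$ is already closed because it is the extension of the concept $d\wedge c$ in $\mathbb{P}^+$.

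In detail, take any $(d,a)$ in the left-hand set. By definition of $A'$, we already have $a\in \val{d}$, which will be important at the end. The downset condition says that for every $(e,x)\in X'$ with $x\in(\val{e}\cap\val{c})^{\uparrow}$ we have $(d,a)\,I'\,(e,x)$. I would now instantiate $e := d$: for every $x\in(\val{d}\cap\val{c})^{\uparrow}$, $(d,a)\,I'\,(d,x)$. Since the first disjunct in the definition of $I'$ (namely $d\neq e$) fails, we are forced into the second disjunct $aIx$. Thus $a\in \bigl((\val{d}\cap\val{c})^{\uparrow}\bigr)^{\downarrow}=(\val{d}\cap\val{c})^{\uparrow\downarrow}$.

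The final step is to close the loop: in $\mathbb{P}^+$ the meet of $c$ and $d$ has extension $\val{c\wedge d}=\val{c}\cap\val{d}$, which is by definition a closed subset of $A$, so $(\val{d}\cap\val{c})^{\uparrow\downarrow}=\val{d}\cap\val{c}$. Combining, $a\in\val{d}\cap\val{c}\subseteq\val{c}$, which delivers $(d,a)\in\{(d,a)\in A'\mid a\in\val{c}\}$ as required.

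I do not expect a serious obstacle: the only subtle point is recognising that the hypothesis $X^{\downarrow}=\varnothing$, used in the preceding two lemmas, is not needed here, because the argument is forced by choosing $e=d$ rather than by any closure property of the enlarged context $\mathbb{P}'$. If a reader objects that we also want the reverse inclusion, note that the converse follows directly from the previous lemma (taking $d=e$ and using $a\in\val{c}\cap\val{e}=\val{c\wedge e}$, whose features are exactly $(\val{e}\cap\val{c})^{\uparrow}$), so together the two inclusions give the equality that the proof of Theorem~\ref{th:3.3forlat} really uses, namely that $h(c)$ is precisely the concept with extension $\{(d,a)\in A'\mid a\in\val{c}\}$.
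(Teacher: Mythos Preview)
Your argument is correct and is essentially the paper's own proof in direct rather than contrapositive form: both specialise to $e=d$, use that $(d,a)I'(d,x)$ forces $aIx$, and invoke the closure of $\val{d}\cap\val{c}=\val{c\wedge d}$ to conclude $a\in\val{c}$. Your side remarks (that the typo should read $a\in\val{c}$, and that $X^{\downarrow}=\varnothing$ is not used here) are also accurate.
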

\begin{proof}
	Let $(d,a)$ be such that $a\notin\val{c}$. Then $a\notin\val{c}\cap\val{d}$ and there exists $x_0\in(\val{d}\cap\val{c})^{\uparrow}$ such that $(a, x_0)\notin I$. Hence, $(d,x_0)\in   \{(e,x)\in X'\mid x\in(\val{e}\cap\val{c})^{\uparrow}\}$ and $(d,a)\notin \{(e,x)\in X'\mid x\in(\val{e}\cap\val{c})^{\uparrow}\}^{\downarrow}$.
\end{proof}

\begin{lemma}
The map $h$ defined above is a meet-preserving embedding.	
\end{lemma}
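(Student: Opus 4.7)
The plan is to verify the two required properties of $h$ separately: injectivity and meet preservation. Both will be reduced, via the two preceding lemmas that compute the intension of $h(c)$ and of its double polar, to straightforward facts about intersections of extensions in the source concept lattice $\mathbb{P}^+$.

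First, for injectivity, I would use that $h(c)$ is uniquely determined by its extension $\val{h(c)} = \{(d,a) \in A' \mid a \in \val{c}\}$. So if $h(c) = h(c')$, then these two sets coincide. Taking any fixed $d \in \mathbb{P}^+$ with $\val{d}\supseteq \val{c}\cup\val{c'}$ (for instance $d = \top$, whose extension is all of $A$ by assumption), the second-coordinate projections of these sets yield $\val{c} = \val{c'}$, and hence $c = c'$ by the Galois correspondence in $\mathbb{P}^+$.

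For meet preservation, I would exploit the fact that, in any concept lattice, meets are computed as intersections of extensions: $\val{c\wedge c'} = \val{c}\cap \val{c'}$, and likewise in $\mathbb{P}'^+$ we have $\val{h(c)\wedge h(c')} = \val{h(c)} \cap \val{h(c')}$. Computing directly,
\begin{align*}
\val{h(c)} \cap \val{h(c')}
&= \{(d,a)\in A' \mid a\in\val{c}\} \cap \{(d,a)\in A' \mid a\in\val{c'}\} \\
&= \{(d,a)\in A' \mid a\in\val{c}\cap\val{c'}\} \\
&= \{(d,a)\in A' \mid a\in\val{c\wedge c'}\} = \val{h(c\wedge c')}.
\end{align*}
Since a formal concept in $\mathbb{P}'^+$ is determined by its extension, this gives $h(c\wedge c') = h(c)\wedge h(c')$.

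The main thing to be careful about is that both $\val{h(c)}$ and $\val{h(c)} \cap \val{h(c')}$ are genuinely closed subsets of $A'$, so that the computation of the meet in $\mathbb{P}'^+$ really is set-theoretic intersection. The first is exactly the content of the first of the two preceding lemmas (applied to $c$), and for the second one can either observe that an arbitrary intersection of closed sets is closed, or argue concretely along the lines of the first lemma, using that $X^{\downarrow} = \varnothing$. No further appeal to the structure of $\mathbb{A}$ or the measure $\mu$ is needed at this step; the argument is purely about the embedding.
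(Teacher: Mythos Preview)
Your proposal is correct and follows essentially the same approach as the paper: both verify injectivity by noting that distinct concepts have distinct extensions (hence distinct images), and both establish meet preservation by the chain $\val{h(c)}\cap\val{h(c')}=\{(d,a)\in A'\mid a\in\val{c}\cap\val{c'}\}=\val{h(c\wedge c')}$, then use that concepts are determined by their extensions. Your added remark that intersections of closed sets are closed (so the meet in $\mathbb{P}'^+$ is literally intersection of extensions) makes explicit a step the paper leaves implicit, but the substance is identical.
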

\begin{proof}
Let $c,d \in \mathbb{P}^+$. If $c\neq d$, it is immediate that $h(c)\neq h(d)$. Now, let us show that $h$ is meet-preserving:
	 \begin{align*}
	 \val{h(c)}\cap\val{h(d)}=&\{(e,a)\mid x\in \val{c}\}\cap \{(e,a)\in\mid x\in \val{c}\}\\
	 =&\{(e,a)\mid x\in \val{c}\cap\val{d}\}
	 =\{(e,a)\mid x\in \val{c\land d}\}
	 =\val{h(c\land d)}.
	 \end{align*}
\end{proof}
Let us show that $\bel(c)=\mu_*(h(c))$ for every $c \in \mathbb{P}^+$. Since $\mathbb{A}$ is an atomic Boolean algebra, every element is equal to the join of the atoms below it.
Furthermore, notice that $d\leq c$ if and only if $d^*\leq h(c)$. Therefore, we have:
\begin{align*}
\mu_*(h(c)) & = \mu( \iota(h(c))) =
\mu \left(  
\bigvee \left\{a \in \mathbb{A} \mid a \leq h(c)
\right\}
\right)
\\ 
& = \mu \left(  
\bigvee \left\{d^* \in \mathbb{A} \mid d\in \mathbb{P}^+ \; \text{ and } \; d^* \leq h(c)
\right\}
\right)
\tag{because each $a \in \mathbb{A}$ is equal to the join of the atoms below it}
\\
& = \mu \left(  
\bigvee \left\{d^* \in \mathbb{A} \mid d \leq c
\right\}
\right)
= \Sigma \{ \mu(d^* ) \mid  d \leq c \}
= \Sigma \{ \mass(c) \mid  d \leq c \}
= \bel_\mass(V(p)).
\end{align*}

Finally, let us show that  $\pl(c)=\mu^*(h(c))$ for every $c \in \mathbb{P}^+$.  Let $U\subseteq{\mathbb{P}}^+$, $c \in \mathbb{P}^+$ and $(d,a) \in A'$. If $(d,a)\in h(c)$, then $(d,a)\in \bigvee_{d\in U}d^*$ if and only if $d\in U$. Furthermore, $(d,a)\in h(c)$ if and only if $a\in\val{c}\cap\val{d}$. Hence, $(d,a)\in h(c)$ implies that $\val{c}\cap\val{d}\neq\emptyset$. We have:

\begin{align}
\bigwedge \left\{ \bigvee_{d\in U}d^* \; \middle| \; U\subseteq{\mathbb{P}}\ \text{ and } \ h(c)\leq \bigvee_{d\in U}d^*\right\}
& = \bigwedge \left\{
\bigvee_{d\in U}d^* \; \middle|  \; \val{d}\cap\val{c}\neq\emptyset \; \text{ implies } \; d\in U \right\}
\notag
\\
& =\bigvee\{d^*\mid \val{d}\cap\val{c}\neq\emptyset\}.
\label{align:proof:th:pl}
\end{align}  
The last equality holds because $\mathbb{A}$ is a Boolean algebra. Hence, we have:
$\bigwedge\{a\in \mathbb{A}\mid c\leq e(a)\}$
\begin{align*}
\mu^*(h(c)) & = \mu(\gamma(h(c)))  
= \mu \left(  \bigwedge\{a\in \mathbb{A} \mid h(c)\leq e(a)\} \right)
\\
& =  \mu \left(\bigwedge
\left\{\bigvee_{d\in U}d^* \; \middle| \; U\subseteq{\mathbb{P}}\ \text{ and }\ h(c)\leq \bigvee_{d\in U}d^*\right\}\right)
\tag{because each $a \in \mathbb{A}$ is equal to the join of the atoms $d^*$ below it}
\\
& = \mu \left( \bigvee \{ d^* \mid \val{c} \cap \val{d} \neq \emptyset \} \right)
\tag{see \eqref{align:proof:th:pl}}
\\
& = \Sigma \{ \mu ( d^* ) \mid \val{c} \cap \val{d} \neq \emptyset \} 
\tag{because $\mu$ is additive}
\\
& = \Sigma \{ \mass (d) \mid \val{c} \cap \val{d} \neq \emptyset \} 
 = \pl_\mass (c).
\end{align*}

This concludes the proof of Theorem \ref{th:3.3forlat}.

	\subsection{Combination of evidence}
	\label{ssec:combining}

Here, we propose a generalized version of Dempster-Shafer rule for concepts.
Let $\mathbb{P}=(A,X,I) $ be a finite formal context and $\mass_1$, $\mass_2$ be two  mass functions on $\mathbb{P}$.
We build 
the combined  mass function $\mass_{1 \oplus 2}$ following Dempster-Shafer's procedure. For all $c_1, c_2 \in \mathbb{P}^+ $, if $\sum \{  \mass_1(c_1) \cdot \mass_2(c_2) \mid \val{c_1 \wedge c_2} \neq \varnothing    \}
\neq 0$, then the combined  mass function $\mass_{1 \oplus 2}$ is defined as follows:
\begin{align}
\label{eq:combinatin rule}
\mass_{1 \oplus 2} \ : \ \mathbb{P}^+ & \rightarrow [0,1]
\\
c & \mapsto 
\left\{
    \begin{aligned}
        \; &0  & \mbox{if } \val{c}=\varnothing \\
        \; &\frac{\sum \{ \mass_1(c_1) \cdot \mass_2(c_2) \mid c_1 \wedge c_2 = c  \} }{\sum \{  \mass_1(c_1) \cdot \mass_2(c_2) \mid \val{c_1 \wedge c_2} \neq \varnothing    \} }  
        & \mbox{otherwise.}
    \end{aligned}
\right.
\notag
\end{align}
It is straightforward to verify that $\mass_{1\oplus 2}$  is a  mass function on $\mathbb{P}$.

%


%

\section{Example: preference aggregation}
	\label{sec:ex:preferences}


In this section, we illustrate how the mass functions can be used to encode preferences and how Dempster-Shafer combination of evidence can be used to  aggregate them. 
In Section \ref{ssec:ex:objects}, we show how Dempster-Shafer theory can be used to make categorisation decision. Namely to answer the question: to which category does an unknown object belong?

\paragraph{The scenario.}
Alice and Bob wish to watch a movie together, and query a movie database by expressing their independent (graded) preferences. The software interface  interprets their preferences as mass functions on the database (modelled as a formal context), and combines them using the rule of Section \ref{ssec:combining}. \\

\noindent\textbf{Case 1. Conflict with no resolution.} Alice wishes to watch a romantic comedy and Bob a chainsaw horror movie. The database they are querying, and its associated concept lattice, look as follows:
\begin{figure}[H]
\begin{center}
\begin{tabular}{cc}
\xymatrix{
x & y & z
\\
a \ar @{-} [u] 
& b  \ar @{-} [u] 
& c  \ar @{-} [u] 
}
& \hspace{2cm}
\xymatrix{
& 
c_\top = (abc,\emptyset)
&
\\
c_1 = (a,x)
\ar @{-} [ur]
& c_2 = (c,z)
\ar @{-} [u]
& c_3 = (b,y)
\ar @{-} [ul]
\\
& 
c_\bot = (\emptyset ,xyz) 
\ar @{-} [u] \ar @{-} [ur] \ar @{-} [ul]
&
}
\end{tabular}
\end{center}
\end{figure}
The diagram on the left represents the database with movies $a$, $b$ and $c$ and features $x$, $y$ and $z$. The line between $a$ and $x$ means that the movie $a$ has feature $f$. The lattice on the right represents the lattice of concepts associated to the database. 

The system interprets the categories `romantic comedy' and `chainsaw horror movie' as the formal concepts $c_1$ and $c_3$, respectively, and encodes Alice and Bob's preferences  as the following mass functions:

\begin{figure}[H]
  \begin{center}
    \begin{tabular}{|c||c|c|c|c|c|}
\hline
& $c_\bot$ & $c_1$ & $c_2$ & $c_3$ &  $c_\top$
\\
\hline
\hline
$\mass_1$ & $0$ & $0.9$  & $0$  & $0$ &  $0.1$
\\
\hline
$\mass_2$ & $0$ & $0$  & $0$  & $0.9$ &  $0.1$
\\
\hline
\end{tabular}
  \end{center}
  \label{fig:mass1}
\end{figure}
The  combination of the two masses above indicates that there is no way to accommodate both preferences in the database:
\begin{figure}[H]
  \begin{center}
    \begin{tabular}{|c||c|c|c|c|c|}
\hline
& $c_\bot$ & $c_1$ & $c_2$ & $c_3$ &  $c_\top$
\\
\hline
\hline
$\mass_{1 \oplus 2}$ & $0$ & $0.47$  & $0$  & $0.47$ &  $0.06$
\\
\hline
\end{tabular}
  \end{center}
\end{figure}

%

\noindent\textbf{Case 2.  Reaching a compromise.} Alice wishes to watch a romantic comedy but  would consider an action movie, while  Bob much prefers a chainsaw horror movie but would consider an  action movie. The database they are querying, and its associated concept lattice, are as in the case above. 
The system interprets the categories `romantic comedy', `action movie' and  `chainsaw horror movie' as the formal concepts $c_1$, $c_2$ and $c_3$, respectively, and encodes Alice and Bob's preferences  as the following mass functions:

\begin{figure}[H]
\begin{center}
\begin{tabular}{|c||c|c|c|c|c|}
\hline
& $c_\bot$ & $c_1$ & $c_2$ & $c_3$ &  $c_\top$
\\
\hline
\hline
$\mass_1$ & $0$ & $0.9$  & $0.1$  & $0$ &  $0$
\\
\hline
$\mass_2$ & $0$ & $0$  & $0.1$  & $0.9$ &  $0$
\\
\hline
\end{tabular}
\end{center}
\end{figure}

The  combination of the two masses above disregards the conflict and highlights the convergence, as in the original Dempster-Shafer setting:

\begin{figure}[H]
\begin{center}
\begin{tabular}{|c||c|c|c|c|c|}
\hline
& $c_\bot$ & $c_1$ & $c_2$ & $c_3$ &  $c_\top$
\\
\hline
\hline
$\mass_{1 \oplus 2}$ & $0$ & $0$  & $1$  & $0$ &  $0$\\
\hline
\end{tabular}
\end{center}
\end{figure}


\noindent\textbf{Case 3.  Solution to the conflict.} Alice wishes to watch a romantic comedy and Bob an action movie. This time, the database they are querying, and its associated concept lattice, look as follows:
\begin{figure}[H]
\begin{center}
\begin{tabular}{cc}
\xymatrix{
x  &  z & y
\\
a \ar @{-} [u]  
& c \ar @{-} [ur] \ar @{-} [ul]  \ar @{-} [u]
& b  \ar @{-} [u]  
}
& \hspace{2cm}
\xymatrix{
& c_\top = (abc, \emptyset) &
\\
c_1 = (ac,x)
\ar @{-} [ur]
&& c_3 = (bc,y)
\ar @{-} [ul]
\\
& c_\bot = (c, xyz )
\ar @{-} [ur] \ar @{-} [ul]
&
}
\end{tabular}
\end{center}
\end{figure}
That is, this database contains and object $c$ that happens to be both a romantic comedy and an action movie. The system interprets the categories `romantic comedy', and `action movie' as the formal concepts $c_1$ and $c_3$, respectively, and encodes Alice and Bob's preferences  as the following mass functions:
\begin{figure}[H]
\begin{center}
\begin{tabular}{|c||c|c|c|c|}
\hline
& $c_\bot$ & $c_1$ & $c_3$& $c_\top$
\\
\hline
\hline
$\mass_1$ & $0$ & $0.9$  & $0$  & $0.1$
\\
\hline
$\mass_2$ & $0$ & $0$  & $0.9$  & $0.1$
\\
\hline

\end{tabular}
\end{center}
\end{figure}
The combination of the two masses points at the solution that simultaneously satisfies Alice and Bob's preferences:
\begin{figure}[H]
\begin{center}
\begin{tabular}{|c||c|c|c|c|}
\hline
& $c_\bot$ & $c_1$ & $c_3$& $c_\top$
\\
\hline
\hline
$\mass_{1 \oplus 2}$ & $0.81$ & $0.09$ & $0.09$  & $0.01$  
\\
\hline
\end{tabular}
\end{center}
\end{figure}


\section{Example:  categorization theory}
	\label{ssec:ex:objects}

 In the present section, we discuss how  the basic tools of Dempster-Shafer theory on formal contexts and their associated concept lattices introduced  in the previous sections can be applied  to the formalization of categorization decisions in various areas. The application we propose here builds on \cite{conradie2016categories,Tarkpaper}, where  formal contexts are  regarded as abstract representations of databases (e.g.~of market products and their relevant features) and their associated formal concepts  as categories, each admitting both an extensional characterization (in terms of the objects that are members of the given category) and an intensional characterization (in terms of the features that are part of the description of the given category). 

\paragraph{Music databases.}
Consider the problem of categorizing a given song $S$ on the base of user-inputs. The categorization procedure consists in aggregating the replies of users to a questionnaire about $S$. 
The questionnaire  makes reference to  the objects of a database, which, for the sake of simplicity, we represent as the following formal context
$\mathbb{P} = (A, X, I)$,
with
$A = \{a,b,c \}$  a given set of songs ($a$ = A-ha -- {\em Take on me}, $b$ = Beyonce -- {\em Crazy in love}, $c$ = Marvin Gaye -- {\em Sexual healing})
and
$X = \{w, x, y, z\}$  a set of relevant features: $w$ = keyboards, $x$ = upbeat tempo,  $y$ = gospel-trained singers, $z$= whispering voices.
\begin{center}
\xymatrix{
w & x &&  y & z 
\\
a  \ar @{-} [u] \ar @{-} [ur]
& & b  \ar @{-} [ul] \ar @{-} [ur]
& & c  \ar @{-} [ul] \ar @{-} [u]
}
\end{center}

The questionnaire makes also reference to the (names of) the categories-as-formal-concepts arising from $\mathbb{P}$, i.e.~the elements of the following lattice $\mathbb{C}$:
\begin{center}
\xymatrix{
& \top = (abc, \emptyset) &
\\
 \mathtt{Pop}=(ab,x)  \ar @{-} [ur] &
& \mathtt{R\&B}=(bc,y)  \ar @{-}[ul]
&
\\
\mathtt{E}\mbox{-}\mathtt{Pop}=(a,wx) \ar@{-} [u]
& \mathtt{Pop}\mbox{-}\mathtt{R\&B}=(b,xy) \ar@{-}[ul] \ar@{-}[ur]
& \mathtt{Funk}=(c,yz) \ar@{-}[u]
& 
\\
& 
\bot = (\emptyset, wxyz)
\ar@{-}[ul] \ar @{-}[ur] \ar@{-}[u] 
&&
}
\end{center}

Specifically, the questionnaire contains statements of the following  types:
\begin{itemize}
\item ``Song $S$ is similar to  song $e$'', for $e\in A$;
\item ``Song $S$ belongs to category \texttt{C}'', for \texttt{C} $\in \mathbb{C}$.
\end{itemize}
Each user chooses one or more such statements and  grades  it on  the following scale: 
$$\{0,\ 0.1,\ 0.2,\ 0.3,\ 0.4,\ 0.5,\ 0.6,\ 0.7,\ 0.8,\ 0.9,\ 1 \}.$$

Three users give the following responses: 
\begin{itemize}
\item User 1: ``Song $S$  is similar to $a$'', graded $0.2$.
\item User 2: ``Song $S$ is similar to  $a$'' graded 0.6,  and ``Song $S$ is similar to  $b$'' graded 0.6, and  ``Song $S$ belongs to  \texttt{E-Pop}'' graded 0, and ``Song $S$ belongs to  \texttt{Pop-R\&B}'' graded 0.
\item User 3: ``Song $S$ belongs to  \texttt{Pop-R\&B}" graded $0.2$ and ``Song $S$ belongs to  \texttt{Funk}" graded $0.6$ and ``Song $S$  belongs to category $\bot$'' graded $0$.
\end{itemize}

\paragraph{The pieces of evidence.}
The users' responses constitute three pieces of evidence that we will represent as mass functions  $\mass \ : \ \mathbb{C} \rightarrow [0,1]$, to each of which we can then associate belief and  plausibility functions as follows: for any $ c \in \mathbb{C}$,
$$\bel_\mass(c)=  \Sigma \{ \mass(d) \mid d \leq  c \} \qquad
\text{and} 
\qquad 
\pl_\mass(c)=  \Sigma \{ \mass(d) \mid c \wedge  d \neq \bot \}.$$

Notice that, in order for the evidence to represent a mass and not a belief, it needs to specify the probability that an object is in a category without being in any of its subcategories.


\begin{itemize}
\item 
To model User 1's response, we need a mass function that assigns mass $0.2$ to the smallest category containing the song $a$, that is, the category that most accurately describes the song $a$. This category is the category the extension of which is the closure of the singleton $\{a\}$. In our example, this  category is $\mathtt{E}\mbox{-}\mathtt{Pop}=(a,wx)$.
Hence, this statement
translates into the mass function $\mass_1 \ : \ \mathbb{C} \rightarrow [0,1]$ that maps  $\mathtt{E}\mbox{-}\mathtt{Pop}$ to $0.2$,  the category $\top$ of all songs in the database to $0.8$ and all other categories to   $0$.

\item Similarly, User $2$'s response 
translates into  the mass function $\mass_2 \ : \ \mathbb{C} \rightarrow [0,1]$ that maps  the smallest category containing both objects $a$ and $b$ (which is the category $\mathtt{Pop} = (ab,x)$) to $0.6$, the category $\top$ to $0.4$ and the remaining categories to $0$.

\item Finally, User $3$'s response 
translates into  the mass function $\mass_3 \ : \ \mathbb{C} \rightarrow [0,1]$ that maps   $\mathtt{Pop}\mbox{-}\mathtt{R\&B}$ to $0.2$, the category $\mathtt{Funk}$ to $0.6$, the category $\top$ to $0.2$, and  the remaining categories to $0$. Notice that $\bot$ might not be the empty category in general; indeed, by definition, the members of $\bot$ are the objects that share all the features in $X$.
\end{itemize}
The following table reports the values of the mass functions.
\begin{center}
\begin{tabular}{|c||c|c|c|c|c|c|c|}
\hline
& $\bot$ & $\mathtt{E}$-$\mathtt{Pop}$ & $\mathtt{Pop}$-$\mathtt{R\&B}$ & $\mathtt{Funk}$ & $\mathtt{Pop}$ & $\mathtt{R\&B}$ & $\top$
\\
\hline
\hline
$\mass_1$ & $0$ & $0.2$ & $0$ & $0$ & $0$ & $0$ & $0.8$
\\
\hline
$\mass_2$ & $0$ & $0$ & $0$ & $0$ & $0.6$ & $0$ & $0.4$
\\
\hline
$\mass_3$ & $0$ & $0$ & $0.2$ & $0.6$ & $0$ & $0$ & $0.2$
\\
\hline
\end{tabular}
\end{center}
The following table reports the belief and  plausibility functions corresponding to the mass functions above:
\begin{center}
\begin{tabular}{|c||c|c|c|c|c|c|c|}
\hline
& $\bot$ & $\mathtt{E}$-$\mathtt{Pop}$ & $\mathtt{Pop}$-$\mathtt{R\&B}$ & $\mathtt{Funk}$ & $\mathtt{Pop}$ & $\mathtt{R\&B}$ & $\top$

\\
\hline
\hline
$\bel_{\mass_1}$ & $0$  &  $0.2$ & $0$ & $0$ & $0.2$ & $0$  & $1$
\\
\hline
$\pl_{\mass_1}$ & $0$  &  $1$ & $0.8$ & $0.8$ & $1$ & $0.8$  & $1$
\\
\hline
\hline
$\bel_{\mass_2}$ & $0$  &  $0$ & $0$ & $0$ & $0.6$ & $0$ & $1$
\\
\hline
$\pl_{\mass_2}$ & $0$  &  $1$ & $1$ & $0.4$ & $0.6$ & $1$ & $1$
\\
\hline
\hline
$\bel_{\mass_3}$ & $0$  &  $0$ & $0.2$ & $0.6$ & $0.2$ & $0.8$ & $1$
\\
\hline
$\pl_{\mass_3}$ & $0$  &  $0.2$ & $0.4$ & $0.8$ & $0.4$ & $1$ & $1$
\\
\hline
\end{tabular}
\end{center}

\paragraph{Combining pieces of evidence.}
Using the combination rule  \eqref{eq:combinatin rule}, 
we get the following aggregated  mass function,  and its associated belief and plausibility functions:
\begin{center}
\begin{tabular}{|c||c|c|c|c|c|c|c|}
\hline
& $\bot$ & $\mathtt{E}$-$\mathtt{Pop}$ & $\mathtt{Pop}$-$\mathtt{R\&B}$ & $\mathtt{Funk}$ & $\mathtt{Pop}$ & $\mathtt{R\&B}$ & $\top$
\\
\hline
\hline
$\mass_{1\oplus 2 \oplus 3}$ & $0$ & $0.07$ & $0.29$ & $0.35$ & $0.17$ & $0$ & $0.12$
\\
\hline
$\bel_{\mass_{1 \oplus 2\oplus 3}}$ & $0$  &  $0.07$ & $0.29$ & $0.35$ & $0.54$ & $0.64$ & $1$
\\
\hline
$\pl_{\mass_{1 \oplus 2\oplus 3}}$ & $0$  &  $0.36$ & $0.58$ & $0.46$ & $0.65$ & $0.93$ & $1$
\\
\hline
\end{tabular}
\end{center}

\paragraph{Analysis of the result.}
Dempster's rule of combination treats every user's piece of evidence as a constraint for other users' pieces of evidence. 
For example, if User $1$ is reports that  $S$ is definitely a $\mathtt{Pop}$ song ($\mass_1(\mathtt{Pop}) =1$)  and User $2$  that $S$ is definitely an $\mathtt{R\&B}$ song ($\mass_2(\mathtt{R\&B})=1$),  then the corresponding aggregated mass assignments  yields $\mass_{1 \oplus 2}(\mathtt{Pop}\mbox{-}\mathtt{R\&B}) =1 $, suggesting that   $S$ can be definitely categorized as a member of the greatest common subcategory of $\mathtt{Pop}$ and $\mathtt{R\&B}$. 


Notice that, in the example above, 
individual mass values for $\mathtt{Funk}$ and $\mathtt{Pop}$ are identical, hence one cannot a priori tell which category best describes song $S$. However, the aggregated mass function assigns   $\mathtt{Funk}$ a greater value than  $\mathtt{Pop}$. 
Hence, combining evidence provides us with more information  and allows for a more accurate categorization of the given song $S$.

Notice also that $\mass_1 ( \top ) =0.8$ is a much higher value than $\mass_2 (\top)$ and $\mass_3 (\top)$.  
Saying that song $S$ belongs to $\top$ provides the  least accurate categorization of $S$, 
which implies that User $1$'s response has the smallest impact on the combined evidence. 
Indeed, when combining  mass functions $\mass_1$ and $\mass_2$, if one of them, say $\mass_2 $, is such that  $\mass_2 (\top) =1$, then then the combined mass function $\mass_{1\oplus 2}$  coincides with $\mass_1$,
that is, $\mass_2 $ provides no information. 

Notice that if $\mass_i(\top)=0$ for any  $i=1,2,3$ then $\mass_{1 \oplus 2\oplus3} (\top) =0$. 
Indeed, if $\mathtt{C}$ is any category for which some mass function $\mass$ satisfies $\mass(\mathtt{C}') =0$, for all $ \mathtt{C}'\geq \mathtt{C}$, then, on combining it with any other mass function, the combined mass function $\mass' $ will also satisfy $\mass' (\mathtt{C}' ) =0$ for all $ \mathtt{C}'\geq \mathtt{C}$. 
This shows that categorization based on combined mass is at least as informative as categorization based on any individual piece of evidence. 

Looking at the belief and plausibility functions obtained by combining  evidence, one can see that the highest belief and plausibility (while excluding $\top$) are assigned to  $\mathtt{R\&B}$. 
This is the case because User $3$ provides a more significant piece of evidence than Users $1$ and $2$ and this piece of evidence supports  $S$ being a member of $\mathtt{Pop}\mbox{-}\mathtt{R\&B}$ and $\mathtt{Funk}$. Hence, in the aggregate, the mass values of $\mathtt{Pop}\mbox{-}\mathtt{R\&B}$ and $\mathtt{Funk}$ are higher than the mass values of the other categories. The same reasoning explains why the mass value of $\mathtt{Pop}$ is higher than that of $\mathtt{E}\mbox{-}\mathtt{Pop}$.
Since  $\mathtt{R\&B}$ is a super-category of $\mathtt{Pop}\mbox{-}\mathtt{R\&B}$ and $\mathtt{Funk}$, the value of the combined belief function in $\mathtt{R\&B}$ turns out to be higher than that of the other categories.

\section{Conclusion}
\label{sec:conclusion}
We have introduced a framework which generalizes  basic notions and results of Dempster-Shafer theory from predicates to formal concepts, and 
give preliminary examples showing how some of these notions apply to reasoning and decision-making  under uncertainty for  categorization problems. 

\paragraph{Understanding belief and plausibility functions on concepts.} 
As we have seen in the examples presented in this paper 
the notion of mass function  is a very versatile tool to formalize decision-making problems in a variety of cases. 
In future work, we plan to gain a better understanding of belief and plausibility functions and to apply them in the formalization of concrete situations.

 \paragraph{Towards a logical theory of conceptual evidence.} In this paper we have not pursued an explicitly logical approach; however,  the structures introduced in Section \ref{ssec:DS:structures} lend themselves naturally as bases for models of  an epistemic/probabilistic logic of categories  generalizing   the epistemic logics for Dempster-Shafer theory  introduced in  \cite{ruspini1986logical, godo2003fuzzy}.
 
 \paragraph{Dempster-Shafer theory of concepts and rough concepts.} A key intermediate step to concretely pursue the direction indicated in the previous point can be  the connection currently emerging between Formal Concept Analysis (FCA) and Rough Set Theory (RST) \cite{pawlak}. Indeed, Rough Set Theory is a theory in information science which provides a purely qualitative modelling of incomplete information via upper and lower  definable approximations of given sets. Instances of these definable approximations arise from outer and inner measures induced by probability measures  (cf.~\cite[Section 7.3]{roughconcepts}), which provides a precise way to articulate the affinity between RST and Dempster-Shafer theory. The connections between the two theories have been  investigated in the literature already for some time (cf.~e.g.~\cite{yao1998-nonequivalence}). 
 
The connection between FCA and RST builds on   \cite{conradie2016categories, Tarkpaper}, where  a framework based on formal concepts is introduced which serves as generalized Kripke semantics for an epistemic logic of categories. In \cite{roughconcepts}, it is shown how the basic structures of RST can be represented as special structures based on formal contexts, in a way that `preserves the embedding', as it were, between the natural modal logics of these structures; in \cite{ICLA2019paper},  it is shown how previous proposals for integrating FCA and RST are subsumed by the logic-based proposal of \cite{roughconcepts}.



\bibliographystyle{ACM-Reference-Format}
\bibliography{BIBeusflat2019}

%

%

\end{document}